\documentclass{article}
\usepackage[T1]{fontenc}
\usepackage{arxiv}
\usepackage{wrapfig}
\usepackage{xcolor}
\usepackage{hyperref}
\usepackage{microtype}
\usepackage{graphicx}
\usepackage{amsmath,amssymb}
\usepackage{algorithm}
\usepackage[noend]{algpseudocode}
\usepackage{tablefootnote}
\providecommand{\Description}[1]{}
\usepackage{url}
\usepackage{booktabs}
\usepackage{amsfonts}
\usepackage{amsthm}
\usepackage{nicefrac}
\usepackage{multirow}
\usepackage[numbers]{natbib}
\setcitestyle{numbers,square}
\usepackage{mathtools}
\usepackage{subcaption}

\definecolor{darkblue}{RGB}{0, 55, 120}

\definecolor{jadegreen}{RGB}{0, 102, 77}
\definecolor{carmine}{RGB}{150, 0, 24}
\definecolor{amber}{RGB}{204, 85, 0}
\definecolor{burntorange}{rgb}{0.8, 0.33, 0.0}
\definecolor{darkred}{rgb}{0.55, 0.0, 0.0}

\newtheorem{theorem}{Theorem}
\newtheorem{proposition}{Proposition}

\newtheorem{definition}{Definition}

\title{Decomposing Private Image Generation via Coarse-to-Fine Wavelet Modeling}

\author{
Jasmine Bayrooti \\
University of Cambridge\thanks{Work done at Google as part of student researcher engagement.} \\
Cambridge, United Kingdom \\
\texttt{jgb52@cam.ac.uk} \\
\And
Weiwei Kong \\
Google Research \\
New York, NY, USA \\
\texttt{weiweikong@google.com} \\
\And
Natalia Ponomareva \\
Google Research \\
New York, NY, USA \\
\texttt{nponomareva@google.com} \\
\And
Carlos Esteves \\
Google Research \\
New York, NY, USA \\
\texttt{machc@google.com} \\
\And
Ameesh Makadia \\
Google Research \\
New York, NY, USA \\
\texttt{makadia@google.com} \\
\And
Amanda Prorok \\
University of Cambridge \\
Cambridge, United Kingdom \\
\texttt{asp45@cam.ac.uk}
}

\begin{document}

\maketitle

\begin{abstract}
Generative models trained on sensitive image datasets risk memorizing and reproducing individual training examples, making strong privacy guarantees essential. While differential privacy (DP) provides a principled framework for such guarantees, standard DP finetuning (e.g., with DP-SGD) often results in severe degradation of image quality, particularly in high-frequency textures, due to the indiscriminate addition of noise across all model parameters. In this work, we propose a spectral DP framework based on the hypothesis that the most privacy-sensitive portions of an image are often low-frequency components in the wavelet space (e.g., facial features and object shapes)
while high-frequency components are largely generic and public. Based on this hypothesis, we propose the following two-stage framework for DP image generation with coarse image intermediaries: (1) DP finetune an autoregressive spectral image tokenizer model on the low-resolution wavelet coefficients of the sensitive images, and (2) perform high-resolution upsampling using a publicly pretrained super-resolution model. By restricting the privacy budget to the global structures of the image in the first stage, and leveraging the post-processing property of DP for detail refinement, we achieve promising trade-offs between privacy and utility. Experiments on the MS-COCO and MM-CelebA-HQ datasets show that our method generates images with improved quality and style capture relative to other leading DP image frameworks.
\end{abstract}

\section{Introduction}

Recent advances in deep generative modeling have enabled high-quality image synthesis from textual descriptions, supporting applications such as creative image generation \cite{gafni2022make, rombach2022high} and interactive editing~\cite{nichol2021glide, avrahami2022blended}. However, training or finetuning these models on sensitive image datasets, including medical images and private photo collections, raises data-privacy concerns as generative models have been shown to memorize and reproduce information about individual training examples~\cite{carlini2023extracting, buchanan2025edge}.

Differential privacy (DP) provides a rigorous mathematical framework to bound this risk by ensuring that model outputs are statistically insensitive to the inclusion or exclusion of any single training example. In practice, most DP methods enforce this guarantee during training or finetuning using variants of differentially private stochastic gradient descent (DP-SGD)~\cite{abadi2016deep}. While DP-SGD provides strong formal privacy guarantees, it introduces a privacy-utility trade-off where achieving strong privacy levels requires aggressive gradient clipping and noise injection, which can overwhelm the learning signal in high-dimensional generative models. This often leads to degradation in visual quality, particularly for complex image distributions with fine-grained structure, such as human faces. As a result, existing DP image generation methods struggle to produce high-fidelity samples on challenging benchmarks, even when leveraging large pretrained models or parameter-efficient finetuning strategies. 

\begin{figure}
    \centering
    \includegraphics[width=0.9\linewidth]{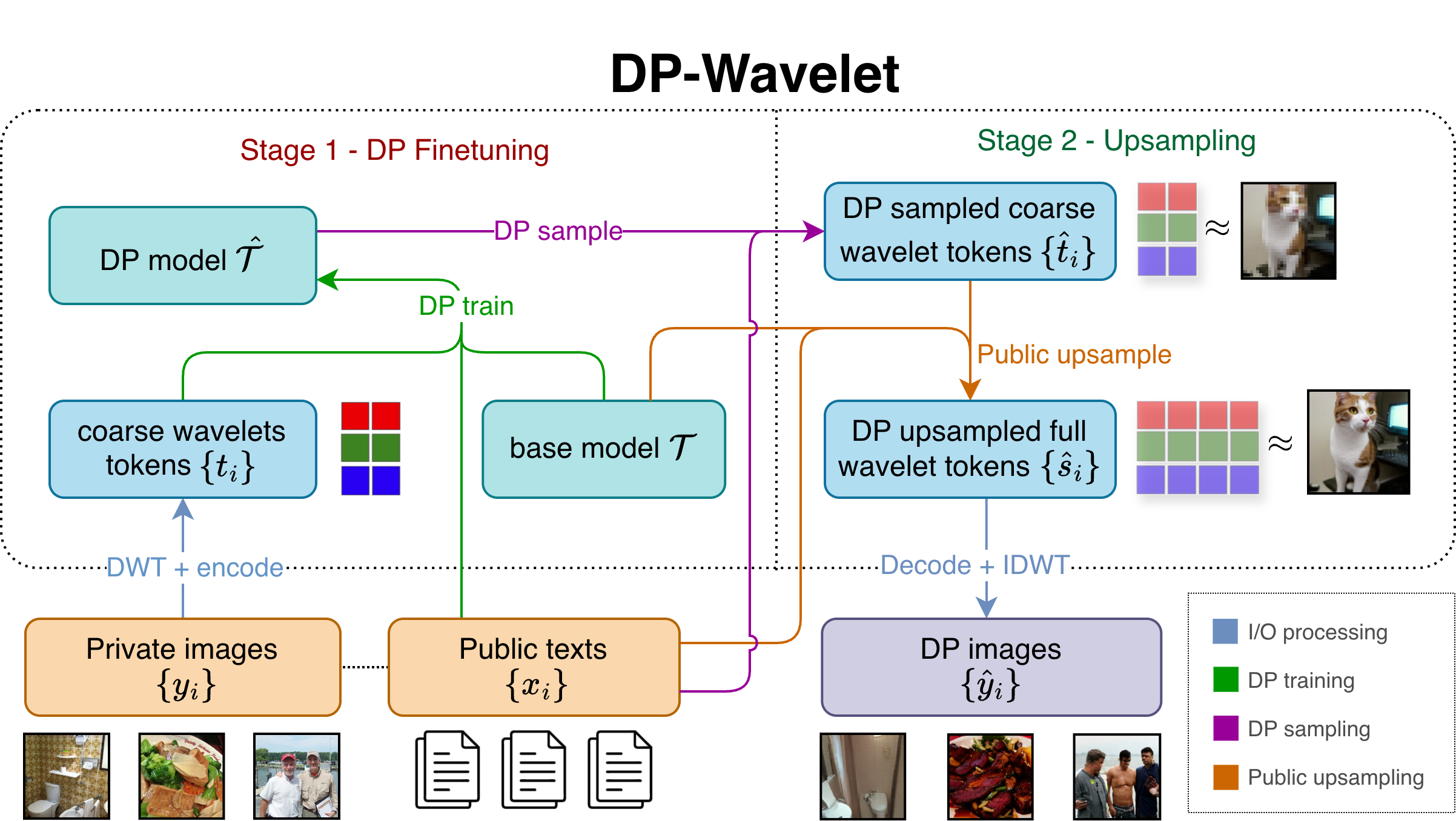}
    \caption{Illustration of the DP-Wavelet method.}
    \label{fig:dp_wavelet_overview}
    \Description{Overview of DP-Wavelet}
\end{figure}

To address these challenges, we introduce a DP image generation framework that improves the signal-to-noise properties of private training by carefully selecting which aspects of the image representation are learned under privacy constraints. Our approach is based on two hypotheses regarding the spectral structure of images and the interaction between DP-SGD and representation granularity. First, we hypothesize that the majority of the semantic signal that captures global structure, style, and content is concentrated in the low-frequency components of an image. This is supported by the energy compaction property of wavelet representations and sparsity of higher-frequency bands~\cite{mallat2002theory}. Second, we hypothesize that fine-scale image details such as skin textures and localized patterns are sufficiently generic to be synthesized by publicly pretrained models without direct access to private data. Together, these observations suggest that privacy budgets should be allocated to coarse, semantically meaningful image structure, while fine details can be generated as post-processing without additional privacy cost.

Building on these insights, we propose DP-Wavelet, a coarse-to-fine method for DP text-to-image generation using wavelet-based image intermediaries. In our setting, text prompts are assumed to be public, while images contain sensitive information. Given a dataset of public texts $x=\{x_i\}$ (e.g., captions and prompts) paired with private images $y=\{y_i\}$, our goal is to generate a set of DP synthetic images $\hat y = \{\hat y_i\}$ conditioned on $x$.

DP-Wavelet operates in two stages. In the first stage, we DP finetune a publicly pretrained wavelet-based image generation model $\mathcal{T}$ on public texts $x$ and a tokenization $\{t_i\}$ of the coarse, low-frequency wavelet components of the corresponding private images $y$, yielding the DP-finetuned model $\hat{\mathcal T}$. In the second stage, $\hat{\mathcal T}$ is used to generate a sequence of DP coarse wavelet tokens $\{\hat{t}_i\}$ conditioned on the texts $x$. The coarse tokens are then autoregressively completed into full wavelet representations $\{s_i\}$ using the frozen pretrained model $\mathcal T$ and finally decoded into images $\hat{y}$. An overview of this process is shown in Figure~\ref{fig:dp_wavelet_overview}, where the Discrete Wavelet Transform (DWT) and its inverse (IDWT) are used to move between pixel and wavelet frequency spaces.

\noindent\textbf{Motivating applications}. The public-text/private-image setting that we consider arises in scenarios where textual descriptions are non-sensitive but images contain confidential information, such as medical imaging datasets where clinical reports may be shareable while the underlying scans are protected. A second use case is style-preserving synthesis, e.g., modeling an artist’s style from public descriptions, while treating individual works as the private unit. DP-Wavelet also extends naturally to settings where texts are private by generating DP prompts using a separate DP text model and using these prompts for sampling; see Appendix~\ref{app:synth_motivation} for an extended discussion.\\

\noindent \textbf{Contributions}.
Our contributions are summarized as follows:
\begin{enumerate}
    \item We introduce DP-Wavelet, the first autoregressive DP method for text-to-image synthesis based on coarse image intermediaries. Unlike prior approaches that apply DP to dense latent or token representations, DP-Wavelet enforces privacy only on low-frequency wavelet components that capture global image structure.
    \item We state and operationalize the hypothesis that privacy budgets should be preferentially allocated to low-frequency image components. By restricting DP optimization to these subbands, our approach improves the signal-to-noise ratio under DP-SGD relative to pixel-space training.
    \item We empirically show that DP-Wavelet achieves competitive distributional quality (FID) and style consistency (LPIPS) relative to text-to-image baselines on the MS-COCO and MM-CelebA-HQ datasets. Notably, DP-Wavelet can produce DP coarse image intermediaries that reliably capture global structure, color composition, and stylistic cues of the underlying data.
\end{enumerate}

\subsection{Related Work}

In this section, we review prior work on non-private and private image synthesis as well as coarse-to-fine generation. \\

\noindent\textbf{Image synthesis}. Several foundational approaches like VAEs~\cite{kingma2013auto} and GANs~\cite{goodfellow2014generative} established the abilities of generative modeling, although they face trade-offs between sample sharpness and training stability. Modern state-of-the-art pipelines address this by decoupling representation learning from generation by first mapping images into compact latent or token spaces using learned autoencoders or tokenizers and then synthesizing images within this space. Diffusion-based approaches generate images by iteratively denoising latent representations across a sequence of noise levels~\cite{ho2020denoising, nichol2021improved, rombach2022high}. In contrast, autoregressive (AR) methods formulate image synthesis as a sequence prediction problem over discrete tokens and use transformer architectures to model token dependencies~\cite{esser2021taming, yu2022scaling}. While diffusion models tend to excel at high-frequency texture synthesis, AR models benefit from maximum-likelihood training and favorable scaling behavior~\cite{tian2024visual}, which we hypothesize may better preserve learning signal under differential privacy. Therefore, our work builds on this AR paradigm and leverages spectral tokenization to enable coarse-to-fine image generation. \\

\noindent\textbf{Private image synthesis}. Early research on DP image generation primarily focused on training GANs with DP-SGD~\cite{xie2018differentially, torkzadehmahani2019dp, bie2023private, chen2020gs, NEURIPS2021_67ed9474}. While effective on simple datasets like MNIST, these methods struggle to scale to more complex image datasets since DP noise injection exacerbates the instability of adversarial training. Therefore, more recent work applies DP to diffusion models, which are more robust to noisy optimization. Most DP diffusion methods rely on finetuning large public pretrained models to preserve semantic structure while restricting private updates~\cite{dockhorn2022differentially, ghalebikesabi2023differentially} and further reduce privacy cost through techniques such as semantic-aware pretraining~\cite{li2024privimage}, curriculum-based schedules~\cite{wang2024dp}, or warm-starting from curated public data~\cite{park2024distribution}. Among these, DP-LDM~\cite{liu2023differentially} mitigates the curse of dimensionality by finetuning only attention modules and is one of the few methods demonstrated on high-resolution text-to-image benchmarks such as MM-CelebA-HQ, whereas most prior work focuses on class-conditional generation. Conversely, DP autoregressive image models remain underexplored, with an initial study on Vision AutoRegressive models~\cite{shaikh2025implementing} indicating that standard DP finetuning strategies currently lag behind diffusion baselines in utility. Orthogonal to DP-SGD–based training, alternative approaches generate private synthetic images without model finetuning by using private evolution over image or textual intermediaries~\cite{lin2023differentially, wang2025synthesize}, achieving strong results when the private data distribution is sufficiently close to that of the underlying pretrained models.\\

\noindent\textbf{Spectral and cascaded image generation}. Coarse-to-fine synthesis strategies decompose generation by progressively resolving structure before details. Cascaded diffusion models, as used in Imagen~\cite{saharia2022photorealistic}, implement this spatially through a pipeline of distinct models where a base generator synthesizes a low-resolution pixel grid which is subsequently refined by separate super-resolution upsamplers. In contrast, the autoregressive spectral image tokenizer (AR-SIT)~\cite{esteves2025spectral} achieves coarse-to-fine generation within a single model by transforming images into discrete wavelet coefficients ordered by frequency scale. Unlike raster-order token generation that is typical of autoregressive transformers like Parti~\cite{yu2022scaling} and LlamaGen~\cite{sun2024autoregressive}, AR-SIT generates global low-frequency components before high-frequency textures and supports partial decoding from coarse token prefixes. This disentanglement provides a natural framework for our work by allowing us to focus private learning on low-frequency semantic structure, which we hypothesize carries most of the privacy cost, while handling the high-frequency details separately.

\subsection{Notation}

\noindent\textbf{Notation and basic definitions}.
Let $\mathbb{R}$ denote the set of real numbers, $\mathbb{C}$ denote the set of complex numbers, and $\mathbb{Z}$ denote the set of integers.
We denote the 2D spatial index by $\mathbf{k} = (k_1, k_2) \in \mathbb{Z}^2$ and $\mathbb{R}^n = \mathbb{R} \times \cdots \times \mathbb{R}$, where the product is taken $n$ times. For any set $S$, let $\overline{S}$ denote its closure.

Given $X\subseteq \mathbb{R}^n$, let $L^2(X)$ be the Hilbert space of square-integrable measurable functions $f: X\to \mathbb{C}$ satisfying 
\[
\|f\|_2 = \left( \int_{-\infty}^{\infty} |f(t)|^2 dt \right)^{1/2} < \infty
\] 
with the standard inner product $\langle f, g \rangle = \int_{-\infty}^{\infty} f(t) \overline{g(t)} dt$.
Given vector spaces $V$ and $W$, let $V \otimes W$ denote their tensor product and $V \oplus W$ denote their direct sum.

For a 2D image (or subband) $X$ and a filter kernel $K$, the discrete 2D convolution $(X \star K)$ at a specific position $(i, j)$ is the sum of the element-wise product of the kernel and the overlapping image region:
\[
(X \star K)[i, j] = \sum_{m} \sum_{n} X[i-m, j-n] \cdot K[m, n].
\]
In the special case where the filter kernel is an outer product $K=uv^\top$, the above convolution can be broken down into two sequential 1D convolutions: (1) convolve the rows of $X$ with the horizontal vector $v^\top$ and then (2) convolve the columns of the resulting image with the vertical vector $u$. \\

\section{Preliminaries}
\label{sec:prelim}

This section provides a review of the following key concepts that are relevant to our work: differential privacy, MRA and 2D wavelets, and popular autoregressive image-generation methods.

\subsection{Differential privacy and DP-SGD}
\label{subsec:dp_review}

Differential privacy (DP) is a rigorous mathematical framework for quantifying the privacy leakage of randomized algorithms. It requires the output of a training procedure to be statistically insensitive to the presence or absence of any single record in the training dataset. This implies that an adversary analyzing a trained model cannot distinguish whether it was trained on a dataset $D$ or a neighboring dataset $D'$ that differs by exactly one example. Some common measures of DP are $\varepsilon$-pure DP or $(\varepsilon,\delta)$-approximate DP~\cite{dwork2006our}. A definition of the latter is given below.
\begin{definition}
($(\varepsilon,\delta)$-DP) Given $\varepsilon>0$ and $\delta \leq 1$, a mechanism (randomized algorithm) $\mathcal{A}:{\mathcal{D}}\mapsto\mathcal{R}$ is $(\epsilon, \delta)$-DP if, for any two neighboring\footnote{Differing by the addition or removal of one training example.} datasets $D$ and $D'$, and any $S \subseteq \mathcal{R}$, we have
\[ P[\mathcal{A}(D) \in S] \leq e^{\varepsilon} \cdot P[\mathcal{A}(D') \in S] + \delta. \]
\end{definition}

In this formulation, the parameter $\varepsilon$, called the privacy budget, limits how much the output distribution can change between neighbors. Smaller values of $\varepsilon$ correspond to stronger privacy guarantees and any mechanism that is $(\varepsilon_0,\delta)$-DP is also $(\varepsilon_1,\delta)$-DP for all $\varepsilon_0 \le \varepsilon_1$. The parameter $\delta$ quantifies a small probability of privacy failure and is typically chosen to be negligible, for example, satisfying $\delta \ll 1/|D|$.\\

\noindent\textbf{DP-SGD}. To achieve this guarantee in machine learning, it is common to use differentially private stochastic gradient descent (DP-SGD)~\cite{abadi2016deep}, which we give in Algorithm~\ref{alg:dpsgd}. 

\begin{algorithm}[htbp]
\caption{DP-SGD}
\label{alg:dpsgd}
\begin{algorithmic}[1]
\Require{dataset $D=\{x_1, \dots, x_N\}$, loss $\mathcal{L}$, learning rate $\eta$, noise multiplier $\sigma$, clipping norm $C$, batch size $B$, iteration count $T$}
\Ensure{DP parameters $\theta_T$}
\State Initialize $\theta_0$ randomly 
\For{$t \in 1 \dots T$}
\State \textbf{sample} Poisson batch $B_t$ with rate $q = B/N$ 
\For{each $x_i \in B_t$}
\State $g_t(x_i) \leftarrow \nabla_{\theta_t} \mathcal{L}(x_i, \theta_t)$ 
\State $\bar{g}_t(x_i) \leftarrow g_t(x_i) / \max\left(1, \|g_t(x_i)\|_2/C\right)$ 
\State \textbf{sample} $Z_t \sim \mathcal{N}(0, \sigma^2 C^2 I)$
\State $\tilde{g}_t \leftarrow ({1}/{B}) \left[ \sum_{i \in B_t} \bar{g}_t(x_i) + Z_t \right]$ 
\State $\theta_{t+1} \leftarrow \theta_t - \eta \tilde{g}_t$ 
\EndFor
\EndFor
\State \Return $\theta_T$
\end{algorithmic}
\end{algorithm}

DP-SGD modifies standard stochastic gradient descent (SGD) by enforcing a bounded global sensitivity and introducing noise to the gradient updates. 
Specifically, the influence of any training example is limited by clipping the $l_2$ norm of its gradient to a maximum threshold $C$ (line~6). Then, it is privatized by adding Gaussian noise with standard deviation $\sigma C$ to the aggregated batch gradient (line~8), which obfuscates any single example's contribution. Other variants of Algorithm~\ref{alg:dpsgd} may modify the update step in line~9 to use another first-order gradient update scheme, e.g., Adam or Adagrad~\cite{ponomareva2023dp}.

For a given noise multiplier $\sigma$, batch size $B$  iteration count $T$, and dataset size $N$, a run of DP-SGD with these parameters satisfies a cumulative privacy level of $(\varepsilon,\delta)$-DP. Privacy accountants quantify this privacy level, e.g., the value of $\delta$ and $\varepsilon$, with the tightness of the resulting bound determined by underlying modeling assumptions. In practice, users typically fix a target privacy level $(\epsilon, \delta)$ and use the accountant to solve for the minimum noise multiplier $\sigma$ required to maintain it. In our experiments, we adopt Privacy Loss Distribution (PLD) privacy accounting~\cite{doroshenko2022connect}.\\

\noindent\textbf{Post-processing}. A fundamental property of DP is its robustness to post-processing, which  guarantees that computations performed on the output of a private mechanism cannot weaken its privacy guarantees provided they do not access the original private data. Formally, this is stated as follows:

\begin{proposition}[Post-Processing~\cite{dwork2014algorithmic}]
\label{prop:ppp}
Let $\mathcal{M}: \mathcal{D} \to \mathcal{R}$ be an $(\epsilon, \delta)$-DP algorithm, and let $f: \mathcal{R} \to \mathcal{R}'$ be an arbitrary (possibly randomized) mapping. Then the composition $f \circ \mathcal{M}: \mathcal{D} \to \mathcal{R}'$ is $(\epsilon, \delta)$-DP.
\end{proposition}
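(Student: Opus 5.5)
We first treat a deterministic $f$ and then reduce the randomized case to it by conditioning on $f$'s internal randomness. Fix neighboring datasets $D, D'$ and a measurable set $S' \subseteq \mathcal{R}'$. For deterministic $f$, let $S = f^{-1}(S') \subseteq \mathcal{R}$. Because $f$ is applied to the output of $\mathcal{M}$ and does not touch the data, the event $\{f(\mathcal{M}(D)) \in S'\}$ equals $\{\mathcal{M}(D) \in S\}$, and the same holds for $D'$. The $(\varepsilon,\delta)$-DP guarantee of $\mathcal{M}$, applied to the set $S$, then gives
\[
P[f(\mathcal{M}(D)) \in S'] = P[\mathcal{M}(D) \in S] \le e^{\varepsilon} P[\mathcal{M}(D') \in S] + \delta = e^{\varepsilon} P[f(\mathcal{M}(D')) \in S'] + \delta .
\]
This settles the deterministic case. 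We only need $f$ measurable so that $S$ is an admissible event. We read the definition's ``any $S \subseteq \mathcal{R}$'' as ranging over measurable sets.

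For randomized $f$, write $f(r) = g(r, W)$. Here $g$ is deterministic and measurable, and $W$ is drawn from a distribution $\mu$ independent of $\mathcal{M}$'s randomness and of the dataset. For each fixed $w$, the map $g(\cdot, w)$ is deterministic, so the first step gives
\[
P[g(\mathcal{M}(D), w) \in S'] \le e^{\varepsilon} P[g(\mathcal{M}(D'), w) \in S'] + \delta .
\]
Since $W$ is independent of $\mathcal{M}$, we can integrate this inequality over $w \sim \mu$ by Fubini/Tonelli. The left side becomes $P[f(\mathcal{M}(D)) \in S']$. The right side becomes $e^{\varepsilon} P[f(\mathcal{M}(D')) \in S'] + \delta$, because the constant $\delta$ integrates to $\delta$ against the probability measure $\mu$. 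This is exactly $(\varepsilon,\delta)$-DP of $f \circ \mathcal{M}$.

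The only step needing care is justifying this representation of a randomized $f$, i.e.\ that an arbitrary Markov kernel from $\mathcal{R}$ to $\mathcal{R}'$ can be realized as a measurable function of its input and independent noise. If that representation is not available, for example on pathological spaces, we argue directly with the kernel $K(r, S')$. In that case $P[f(\mathcal{M}(D)) \in S'] = \mathbb{E}[K(\mathcal{M}(D), S')]$, with $K(\cdot, S') \in [0,1]$. By the layer-cake formula this equals $\int_0^1 P[K(\mathcal{M}(D), S') > t]\, dt$. Applying the DP inequality to each superlevel set $\{r : K(r, S') > t\}$ and integrating over $t \in [0,1]$ gives the same bound, again with $\delta$ contributing exactly $\delta$.
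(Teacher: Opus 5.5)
Your proof is correct and is essentially the standard argument behind this result; the paper gives no proof of its own and defers to Dwork and Roth. That argument handles deterministic $f$ via the preimage $f^{-1}(S')$, then treats a randomized $f$ as a mixture of deterministic maps over its independent internal randomness, and your layer-cake argument with the kernel $K(r,S')$ is a worthwhile addition because it settles the randomized case on its own without needing the representation $f(r)=g(r,W)$.
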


This property is important in our work as it allows fine-scale image refinement to be performed entirely as post-processing of a DP coarse representation without incurring additional privacy loss.

\subsection{MRA and 2D Wavelets}
\label{subsec:wavelet_review}

We ground our coarse-to-fine image generation framework in Multiresolution Analysis (MRA), which allows us to explicitly decouple the ``private'' global structure from ``public'' local texture by decomposing the image space into nested subspaces.

In this framework, an \textit{MRA} of $L^2(\mathbb{R})$ consists of a sequence of nested\footnote{This also means that any function that is representable in $V_j$ is also representable in $V_{j+1}$.} closed function subspaces $V_j \subset V_{j+1}$ whose union is dense\footnote{That is, any square integrable function can be approximated to arbitrary precision using elements in $\bigcup_j V_j$.} in $L^2(\mathbb{R})$.
The subspace $V_0$ is defined as the closure of the linear span of integer translations of a scaling function $\phi \in L^2(\mathbb{R})$ (often called the Father wavelet). That is,
\[
V_0 = \overline{\text{span} \{\phi(\cdot -k) : k \in \mathbb{Z} \}}
\]
where $\{\phi(\cdot-k) \}_{k \in \mathbb{Z}}$ forms an orthonormal basis for $V_0$. An associated wavelet function (or Mother wavelet) $\psi$ is then constructed such that its integer translations span the orthogonal complement $W_0$ and we then construct $V_1 = V_0 \oplus W_0$.
For general scales $j$, the subspaces $V_j$ and $W_j$ are generated by dilations of $\phi$ and $\psi$ respectively, satisfying the recursive relation $V_{j+1} = V_j \oplus W_j$.

To extend this to an MRA of 2D images in $L^2(\mathbb{R}^2)$, we construct a separable basis using tensor products of the 1D spaces. This process naturally yields four generating functions that define the geometry of the 2D decomposition: one  functions scaling function $\Phi$ and three distinct wavelets $\{\Psi^H, \Psi^V, \Psi^D\}$ where $H =$ horizontal, $V=$ vertical, and $D$ = diagonal.
The 2D approximation space at scale $j$, denoted $V_j^{2D}$, is then the tensor product $V_j \otimes V_j$. Crucially, the detail space needed to move from scale $j$ to $j+1$ decomposes into three orthogonal orientation bands:
\[
V_{j+1}^{2D} = \underbrace{(V_j \otimes V_j)}_{V_j^{2D}} \oplus \underbrace{(V_j \otimes W_j)}_{W_j^H} \oplus \underbrace{(W_j \otimes V_j)}_{W_j^V} \oplus \underbrace{(W_j \otimes W_j)}_{W_j^D}.
\] 
Similar to the 1D setting, the orthonormal basis for the subspace $V_j^{2D}$ consists of the collection of all integer translations of the dilated generator. More specifically, for a spatial index $\mathbf{k} = (k_1, k_2) \in \mathbb{Z}^2$ and scale $j$, the normalized 2D scaling basis function is
\begin{equation}
    \Phi_{j,\mathbf{k}}(x,y) = 2^j \Phi(2^j x - k_1, 2^j y - k_2) \label{eq:2d_scale_fn}
\end{equation}
and the set $\{ \Phi_{j,\mathbf{k}} \}_{\mathbf{k} \in \mathbb{Z}^2}$ forms the complete orthonormal\footnote{The factor $2^j$ is required to maintain unit energy (normalization). As scale $j$ increases, the spatial support shrinks by $2^{-j}$ in both dimensions (area scales by $2^{-2j}$). To satisfy the isometry condition $\|\Phi_{j,\mathbf{k}}\|_2 = 1$, the amplitude must scale by $\sqrt{2^{2j}} = 2^j$.} basis for $V_j^{2D}$. Similarly, we can define $\Psi_{j,k}^\lambda$ to be as in \eqref{eq:2d_scale_fn} but with $\Phi=\Psi^\lambda$ and the sets $\{ \Psi^\lambda_{j,\mathbf{k}} \}_{\mathbf{k} \in \mathbb{Z}^2}$ form the bases for the detail spaces $W_j^\lambda$ for $\lambda \in \{H,V,D\}$. 

Consequently, using the orthogonality of the basis functions, for a given \textit{coarseness level} (hyperparameter) $j_0 \geq 1$, any image $f(x,y)$ can be expressed as:
\begin{align*}
f(x,y) &= \underbrace{\sum_{\mathbf{k} \in \mathbb{Z}^2} c_{j_0, \mathbf{k}} \Phi_{j_0, \mathbf{k}}(x,y)}_{\text{Approximation at scale } j_0} + \sum_{j=j_0}^{\infty} \underbrace{\sum_{\mathbf{k} \in \mathbb{Z}^2} \sum_{\lambda \in \{H,V,D\}} d^\lambda_{j, \mathbf{k}} \Psi^\lambda_{j, \mathbf{k}}(x,y)}_{\text{Details for scale }j},
\end{align*}
where the coefficients are defined by  inner products with the basis functions:
\begin{equation}
c_{j_0, \mathbf{k}} = \langle f, \Phi_{j_0, \mathbf{k}} \rangle \quad \text{and} \quad 
d^\lambda_{j, \mathbf{k}} = \langle f, \Psi^\lambda_{j, \mathbf{k}} \rangle. 
\end{equation}
This representation maps directly to our privacy hypothesis:
\begin{itemize}
    \item \textbf{Approximation coefficients} ($c_{j_0, \mathbf{k}}$): These coefficients project the image onto $V_{j_0}^{2D}$. They capture the bulk of the signal energy and semantic identity, \textit{representing the ``private'' payload}.
    \item \textbf{Detail coefficients} ($d^\lambda_{j, \mathbf{k}}$): These coefficients project onto the detail spaces $W_{j}^{2D}$. They are typically sparse and capture \textit{generic textures that we treat as ``public'' information}.
\end{itemize}
While formally defined by continuous integrals, the coefficients $\{c_{j,\mathbf{k}}, d_{j,\mathbf{k}}^\lambda\}$ are computed in practice using the Discrete Wavelet Transform (DWT), also known as Mallat's algorithm~\cite{mallat1999wavelet}. Given an input of $N$ pixels, this efficient $O(N)$ algorithm utilizes a cascade of discrete low-pass and high-pass filters followed by dyadic downsampling. More specifically, at each scale $j$, the DWT decomposes the signal into four distinct subbands; we denote these using the standard filter notation $XY_j$, where $X,Y \in \{L, H\}$ indicate whether the low-pass or high-pass filter was applied to the rows and columns, respectively (see Figure~\ref{fig:wavelet} for a visualization):
\begin{itemize}
    \item $LL_j$ ($\approx c_{j_0 + j,\mathbf{k}}$): Represents the approximation at scale $j_0+j$. In the encoding phase, this becomes the input for the next decomposition step.
    \item $LH_j$ ($\approx d^H_{j_0 + j,\mathbf{k}}$): Captures horizontal edges.
    \item $HL_j$ ($\approx d^V_{j_0 + j,\mathbf{k}}$): Captures vertical edges.
    \item $HH_j$ ($\approx d^D_{j_0 + j,\mathbf{k}}$): Captures corner features and diagonal textures.
\end{itemize}
In particular, given a coarseness scale $j_0$, the subband $LL_0$ corresponds to the approximation at the scale $j_0$ and is obtained by repeatedly applying the DWT. This recursive calculation is realized in our experiments using the Haar wavelet basis~\cite{haar1910theorie}; see \eqref{eq:haar_update} below. Although wavelet coefficients are formally defined as inner products over continuous domains, the DWT computes them efficiently via discrete convolution operations. As a result, extracting the $LL_0$ tokens required for private training amounts to running a standard DWT and avoids any high-dimensional continuous computations.

\begin{figure}[t]
    \centering
    \includegraphics[width=0.9\linewidth]{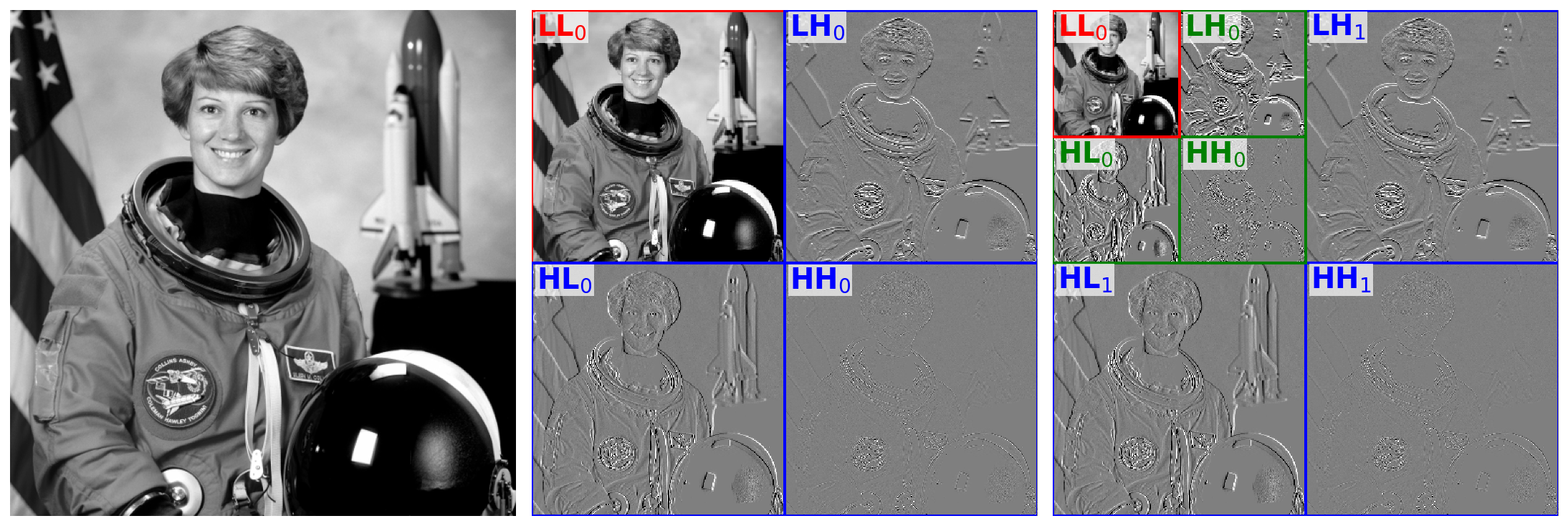}
    \caption{Visualization of the 2D Discrete Wavelet Transform. \textit{Left}. Original image, \textit{Center}. A one-level decomposition where the image is split into the approximation $LL_0$ and details $\{LH_0, HL_0, HH_0\}$, \textit{Right}, A two-level decomposition where the image is first split into scale 1 details in blue and a scale 1 approximation, which is then recursively split into scale 0 details in green and the final approximation $LL_0$ in red. Note that indices are relative to the coarsest scale $LL_0$.} 
    \label{fig:wavelet}
    \Description{Illustration of repeated DWT}
\end{figure}

\subsection{Autoregressive spectral image tokenizer}

Our generative framework builds on the popular vector-quantized autoregressive image models, specifically the VQ-GAN paradigm~\cite{esser2021taming}. In this setting, an encoder-decoder tokenizer ($\mathcal{E}$, $\mathcal{D}$) maps images to sequences of discrete tokens drawn from a learned codebook while an autoregressive transformer $\mathcal{T}$ models the token distribution. This formulation reduces high-dimensional image synthesis to sequence modeling over discrete tokens and enables the use of standard transformer architectures, as demonstrated by large-scale autoregressive models such as Parti~\cite{yu2022scaling} and LlamaGen~\cite{sun2024autoregressive}.

\begin{figure}[tbh]
    \centering
\includegraphics[width=0.7\linewidth]{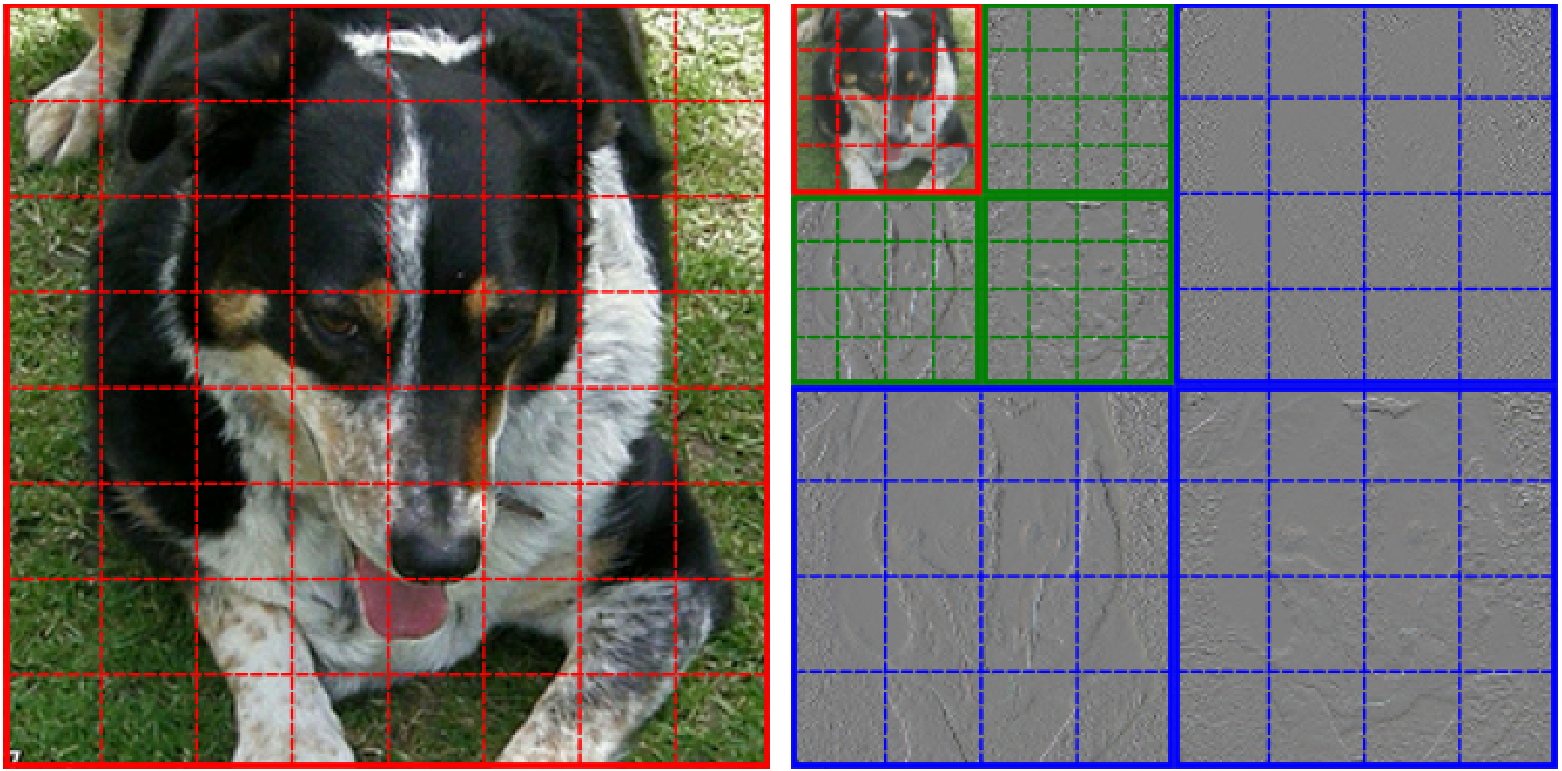}
    \caption{\textit{Left}. Pixel-based patchification, \textit{Right}. Wavelet-based patchification where separate codebooks are used for the red, green, and blue patches~\cite{esteves2025spectral}.}
    \label{fig:ar_sit}
    \Description{Patchification schemes}
\end{figure}

We use the autoregressive spectral image tokenizer (AR-SIT)~\cite{esteves2025spectral}, which extends this framework by performing tokenization in the wavelet domain rather than pixel space. AR-SIT applies a discrete wavelet transform (DWT) to decompose an image into multiscale subbands and then patchifies each subband independently. In our implementation, we use Haar wavelets for their simplicity and strong reconstruction properties. Using the notation from Section~\ref{subsec:wavelet_review}, the subbands are computed recursively as

\begin{equation}
\begin{gathered}
LL_{j-1} = LL_{j} \star (gg^\top), \quad
LH_{j-1} = LL_{j} \star (gh^\top), \\
HL_{j-1} = LL_{j} \star (hg^\top), \quad 
HH_{j-1} = LL_{j} \star (hh^\top), \nonumber
\end{gathered}
\label{eq:haar_update}
\end{equation}
for every $j\geq 1$, where $g=[1,1]^\top$, $h=[1,-1]^\top$, and $\star$ is the discrete 2D convolution.

The AR-SIT encoder $\mathcal{E}$ applies the DWT up to a specified coarseness level and patchifies every wavelet subband independently; Figure~\ref{fig:ar_sit} shows the visual contrast between this spectral patching process and standard pixel/raster-order patching. Since approximation and detail subbands have different statistics across scales, AR-SIT uses an Approximation–Details Transformer (ADTransformer) that processes approximation and detail tokens with separate layer normalizations, MLPs, and attention blocks rather than shared parameters.

A key capability of this architecture is partial decoding, which is achieved by enforcing scale-causal attention in the SIT decoder. This mechanism ensures that the initial sequence of tokens corresponding to the low-frequency approximation $LL_0$ can be decoded into a recognizable coarse image before any high-frequency details are generated. That is, for a maximum recursion depth $J$ and $0< k \leq J$ the tokens for $LL_{0}$ and $\{LH_{j}, HL_{j}, HH_{j}\}_{j=0}^{k}$ may be used as an approximation of the fully re-constructed image; see Figure~\ref{fig:ar_sit_partial} for an illustration and \cite{esteves2025spectral} for more examples. This separability is central to our framework because it allows us to restrict the privacy budget to the coarse generation stage while delegating detail synthesis to a frozen public prior.

\begin{figure}[tbh]
    \centering
    \includegraphics[width=0.9\linewidth]{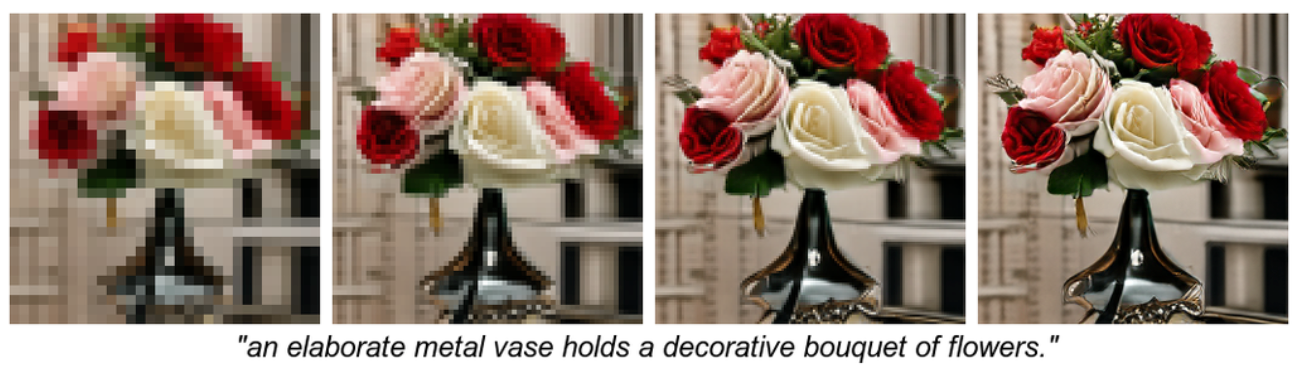}
    \caption{An example of AR-SIT's partial generation of images using 25\%, 50\%, 75\%, and 100\% (left-to-right) of the full AR transformer token sequence~\cite{esteves2025spectral}.}
    \label{fig:ar_sit_partial}
    \Description{Illustration of partial rendering in AR-SIT}
\end{figure}

\section{DP-Wavelet for DP image generation}
\label{sec:dp_wavelet}

In this section, we present DP-Wavelet, a coarse-to-fine framework for differentially private text-to-image generation. As outlined in Algorithm~\ref{alg:dp_wavelet}, our method decomposes the generative process into two distinct stages: private coarse-scale synthesis and public fine-scale completion. In the first stage, we finetune a pretrained AR-SIT transformer on a private dataset using a DP optimizer (e.g., DP-Adam, DP-SGD, or DP-LoRA). Crucially, to maximize the gradient's signal-to-noise ratio, this optimization is limited to the parameters responsible for predicting the energy-dense coarse wavelet tokens (scale $j_0$) conditioned on text. In the second stage, we sample the DP-finetuned model to generate DP coarse tokens and autoregressively predict the remaining high-frequency tokens with the frozen, publicly pretrained transformer. The resulting complete spectral sequence is then decoded into the output image.

\begin{algorithm}[htbp]
\caption{DP-Wavelet}
\label{alg:dp_wavelet}
\begin{algorithmic}[1]
\Require 
    Coarseness scale $j_0$; dataset of text-image pairs $\{(x_i, y_i)\}$; public pretrained AR-SIT encoder $\mathcal E$, decoder $\mathcal D$, and transformer $\mathcal T$; privacy parameters $(\varepsilon, \delta)$; 
\Ensure DP images $\{\hat{y}_i\}$.

\Statex \textbf{Stage 1: Low resolution DP finetuning}
\State \textit{Tokenize} each private image $y_i$ using $\mathcal E$ to extract the sequence of approximation tokens $\{t_{i,k}\}_{k\geq 1}$ corresponding to the $LL_{0}$ subband at scale $j_0$.

\State \textit{DP Finetune} $\mathcal T$ using a privacy budget $(\varepsilon, \delta)$ to obtain $\hat{\mathcal T}$, optimizing for the prediction of partial sequences $t_i$ conditioned on texts $x_i$.

\Statex \textbf{Stage 2: Inference and upsampling}

\State \textit{Generate} coarse DP token sequences $\{\hat{t}_{i,k}\}_{k\geq1}$ using $\hat{\mathcal T}$ conditioned on public texts $\{x_i\}$.

\State \textit{Complete} (upsample) the partial sequences into full token sequences $\{s_{i,k}\}_{k\geq 1}$ using the frozen pretrained transformer $\mathcal T$.

\State \textit{Decode} the completed sequences $\{s_{i,k}\}_{k\geq 1}$ using $\mathcal D$ to obtain images $\hat y_i$.
\State \Return $\{\hat{y}_i\}$
\end{algorithmic}
\end{algorithm}

The design of DP-Wavelet is motivated as follows.

\begin{itemize}
    \item \textbf{Mitigating the curse of dimensionality}. 
    The utility cost of DP optimization is known to scale with the square root of the parameter dimension $d$~\cite{bassily2014private}. Motivated by this, we strictly limit finetuning in stage 1 to the subset of AR transformer parameters in $\mathcal{T}$ responsible for generating coarse scale $j_0$ tokens. By freezing the parameters governing high-frequency details in addition to the tokenizer $\mathcal{E}$ and decoder $\mathcal{D}$, we significantly reduce the dimensionality of the gradient updates and consequently the utility cost.

    \item \textbf{Maximizing signal-to-noise via energy compaction.} By the energy compaction property of wavelets, the majority of an image's signal energy lies in the low-frequency $LL_{0}$ subband~\cite{mallat2008wavelet}. This motivates our strategy of DP finetuning on the coarse scale as gradients derived from these energy-dense coefficients likely carry a more robust learning signal relative to the fixed noise floor of the DP optimizer. In contrast, gradients for high-frequency subbands are typically sparse and low-magnitude, making them more susceptible to being overwhelmed by DP noise.

    \item \textbf{Universality of high-frequency priors.} We hypothesize that high-frequency subbands \\
    $\{LH_{j}, HL_{j}, HH_{j}\}_{j\geq 0}$ capture domain-agnostic texture statistics~\cite{simoncelli2001natural}. Under this assumption, a model pretrained on public data can synthesize these details without accessing private data. This enables us to treat detail generation as a post-processing step and reserve the full privacy budget for structural components.

\end{itemize}

In Appendix~\ref{app:dp_proof}, we show Algorithm~\ref{alg:dp_wavelet} is $(\varepsilon,\delta)$-DP.

\section{Experiments}
\label{sec:numerical}

We evaluate DP-Wavelet on the MS-COCO dataset~\cite{lin2014microsoft}, containing diverset text-image pairs from the internet, and the MM-CelebA-HQ dataset~\cite{xia2021tedigan}, consisting of celebrity faces with descriptive captions.

\noindent\textbf{Baselines.} We compare our method against two state-of-the-art private generative models:

\begin{itemize}
\item \textbf{DP-LDM}~\cite{liu2023differentially} is a latent diffusion model finetuned from the Stable Diffusion v1.4 checkpoint. Following the  procedure in \cite{liu2023differentially}, DP finetuning is applied only to the U-Net attention modules and text conditioning parameters.
\item \textbf{DP-LlamaGen} is our DP adaptation of the LlamaGen-XL autoregressive model~\cite{sun2024llamagen}, which generates images as raster-ordered sequences of discrete image tokens. We apply DP finetuning to all transformer parameters, as the raster ordering interweaves structure and detail, preventing the clean spectral separation used in DP-Wavelet.
\end{itemize}

\noindent\textbf{Implementation details.} For DP-Wavelet, we use the pretrained AR-SIT tuple $(\mathcal E, \mathcal D, \mathcal T)$ corresponding to the ``SIT-SC-5'' configuration in \cite{esteves2025spectral}. We set the coarseness scale $j_0$ to correspond to a $32 \times 32$ spatial resolution. Privacy accounting is performed using the PLD accountant in JAX Privacy~\cite{jaxprivacy2022github}. Additional hyperparameters and training details are provided in Appendix~\ref{app:hypers}.

\noindent\textbf{Evaluation Metrics}. We benchmark performance across three privacy regimes: (i) \textit{Pretrained} with zero-shot inference, (ii) \textit{Non-private} with $\varepsilon = \infty$, and (iii) \textit{Private} with $\varepsilon < \infty$. We report two standard metrics computed on 30k samples:
\begin{itemize}
\item \textbf{Fréchet Inception Distance (FID)}~\cite{heusel2017gans} measures the distributional similarity between generated and real images (lower is better).
\item \textbf{LPIPS}~\cite{zhang2018unreasonable} measures the average perceptual distance between ground truth images and generated samples conditioned on the same text prompt (lower is better).
\end{itemize}

\subsection{MS-COCO results}
\label{subsec:ms_coco}

\noindent\textbf{Data processing.} We finetune all models on the MS-COCO training split using random cropping and horizontal flipping data augmentations. For text conditioning, we sample uniformly from the five captions available per image during training. To ensure the LPIPS evaluation is valid during inference, we consistently condition the generation on the first caption of each test image.

\noindent\textbf{Quantitative results.} Table~\ref{tab:ms_coco_metrics} presents the performance across the three privacy regimes for each method, from which we observe:
\begin{itemize}
\item \textit{Base model bias.} DP-LDM achieves the lowest  FID scores across most settings. A likely contributing factor is the Stable Diffusion backbone's pretraining on LAION-5B (5.85 billion image-text pairs), which subsumes the MS-COCO distribution. Similarly, DP-LlamaGen demonstrates strong performance, surpassing DP-LDM in LPIPS values, and also benefits from its pretraining on the LAION-COCO subset (50 million pairs), which was curated to match the COCO distribution. In contrast, the AR-SIT model was pretrained on a subset of 128 million images from WebLI~\cite{chen2022pali}, resulting in a greater distributional mismatch that contributes to its higher starting FID relative to the baselines.

\item \textit{Finetuning performance.} In the non-private regime ($\varepsilon=\infty$), DP-Wavelet achieves the best overall LPIPS of 0.666, validating the efficacy of spectral tokenization for structural modeling. This property remains robust under privacy budgets ($\varepsilon=10, 1$), where our method maintains competitive LPIPS values comparable to the DP-LlamaGen baseline.
\end{itemize}

\begin{table}[htbp]
\caption{Performance on MS-COCO. \label{tab:ms_coco_metrics}}
\begin{center}
\begin{tabular}{l|ccc}
 & \multicolumn{3}{c}{\textbf{FID} $\downarrow$ / \textbf{LPIPS} $\downarrow$} \\ \cline{2-4} 
 & DP-LDM & DP-LlamaGen & DP-Wavelet \\ \hline
Pretrained & \textbf{13.2} / 0.759 & 17.7 / \textbf{0.728} & 23.1 / 0.751 \\ 
$\varepsilon = \infty$ & 12.2 / 0.755 & \textbf{11.6} / 0.715 & 15.5 / \textbf{0.666} \\ 
$\varepsilon = 10$ & \textbf{12.3} / 0.757 & 13.8 / \textbf{0.723} & 19.4 / 0.751 \\ 
$\varepsilon = 1$ & \textbf{12.8} / 0.758 & 16.1 / \textbf{0.728} & 20.1 / 0.749 \\ 
\end{tabular}
\end{center}
\end{table}

\noindent\textbf{Qualitative results.} Figure~\ref{fig:ms_coco_samples} presents samples across the three privacy regimes. Consistent with the metrics in Table~\ref{tab:ms_coco_metrics}, DP-Wavelet demonstrates strong adaptability in the non-private setting ($\varepsilon=\infty$), faithfully capturing specific attributes such as the dress color and background composition in the first row. As the privacy budget tightens ($\varepsilon < \infty$), these specific details shift towards more generic representations (e.g., changing the dress color and background), demonstrating effective privacy protection while maintaining semantic alignment. In contrast, DP-LDM generates nearly identical outputs across some regimes (e.g., first row), suggesting an over-reliance on its public pretraining.

\begin{figure}[htbp]
    \centering
    \begin{subfigure}[t]{0.49\linewidth}
        \centering
        \includegraphics[width=\linewidth]{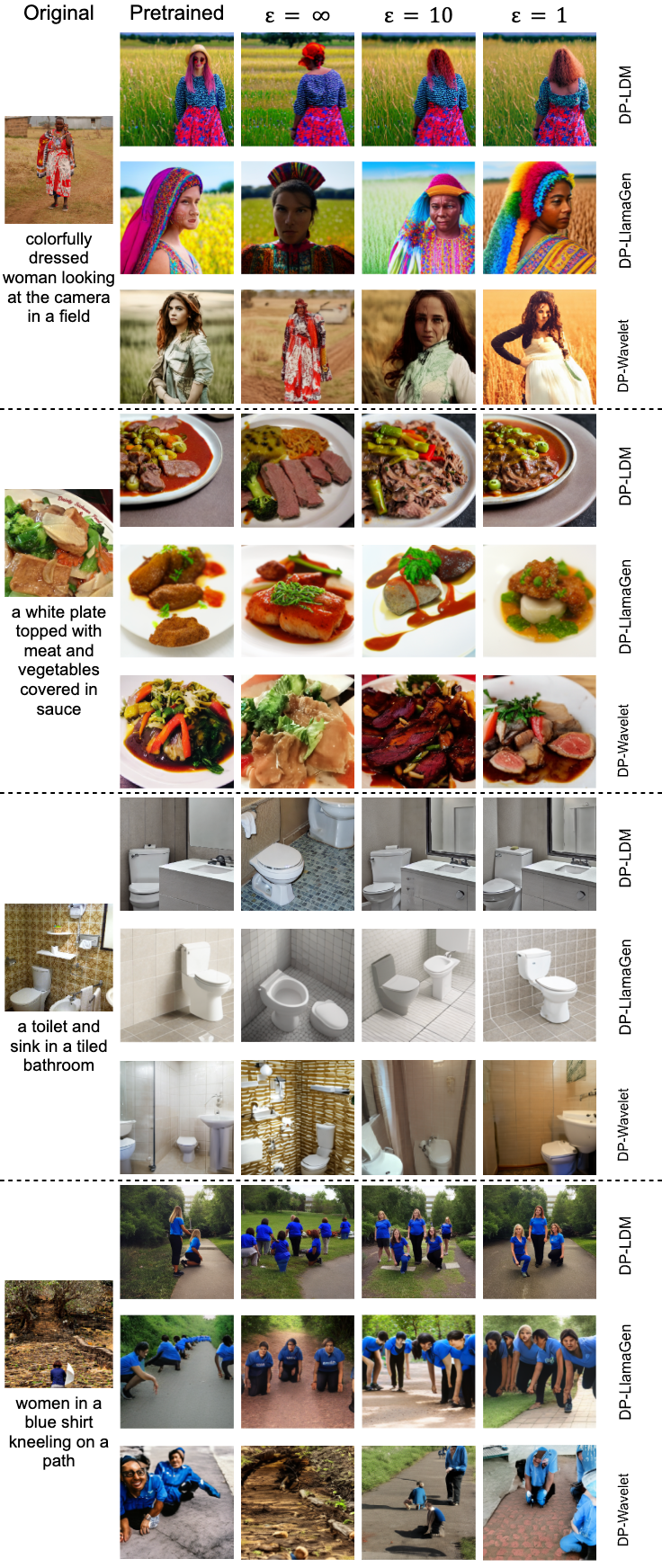}
        \caption{MS-COCO samples}
        \label{fig:ms_coco_samples}
    \end{subfigure}
    \hfill
    \begin{subfigure}[t]{0.49\linewidth}
        \centering
        \includegraphics[width=\linewidth]{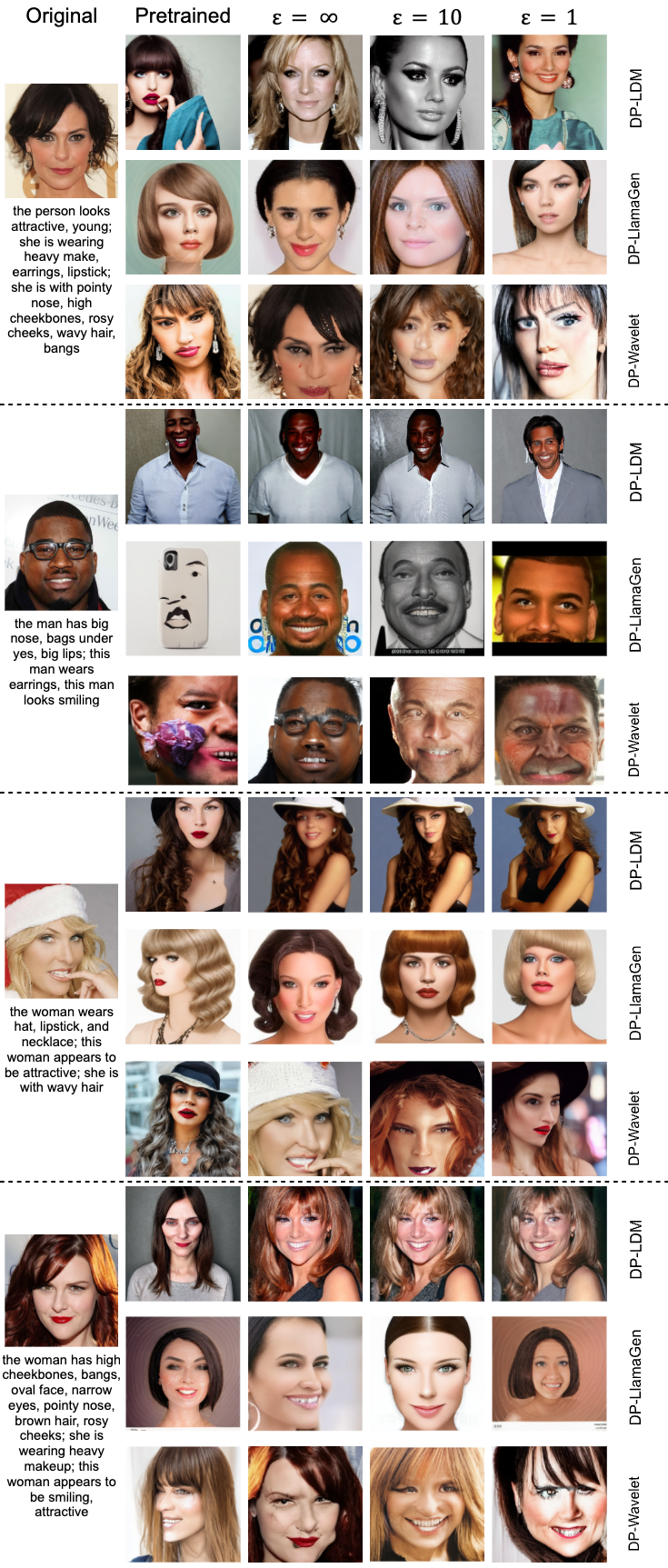}
        \caption{MM-CelebA-HQ samples}
        \label{fig:mm_celeb_samples}
    \end{subfigure}
    \caption{Sample images generated by the benchmark algorithms for MS-COCO and MM-CelebA-HQ.}
    \label{fig:sample_images_side_by_side}
\end{figure}

\subsection{MM-CelebA-HQ results}
\label{subsec:mm_celeb}

\noindent\textbf{Data processing.} We finetune all models on the MM-CelebA-HQ dataset using the same image transforms and taking the sole caption for each image to form the text-image pairs.

\noindent\textbf{Quantitative results.} Table~\ref{tab:mm_celeb_metrics} presents the FID and LPIPS values for the different privacy regimes on MM-CelebA-HQ, from which we observe:
\begin{itemize}
\item \textit{Base model bias.} Pretraining bias is likely less pronounced on MM-CelebA-HQ as none of the base models were explicitly trained on face-centric datasets. In this setting, DP-Wavelet attains the lowest pretrained FID, suggesting that our coarse-to-fine spectral prior generalizes more effectively to new domains than the standard autoregressive or diffusion baselines.

\item \textit{Finetuning performance.} DP-Wavelet demonstrates the most stable performance across privacy regimes. In the non-private case ($\varepsilon=\infty$) it achieves the best metrics and under moderate privacy ($\varepsilon=10$) it attains the lowest FID value and LPIPS close to DP-LlamaGen. In contrast, DP-LlamaGen suffers from a degradation of FID during DP finetuning and the stagnant LPIPS of DP-LDM suggests limited domain adaptation.
\end{itemize}

\begin{table}[htbp]
\caption{Performance on MM-CelebA-HQ.\label{tab:mm_celeb_metrics}}
\begin{center}
\begin{tabular}{l|ccc}
 & \multicolumn{3}{c}{\textbf{FID} $\downarrow$ / \textbf{LPIPS} $\downarrow$} \\ \cline{2-4} 
 & DP-LDM & DP-LlamaGen & DP-Wavelet \\ \hline
Pretrained & 64.7 / 0.734 & 86.5 / \textbf{0.672} & \textbf{54.5} / 0.711 \\ 
$\varepsilon = \infty$ & 25.7 / 0.726 & 32.5 / 0.586 & \textbf{18.2} / \textbf{0.581} \\ 
$\varepsilon = 10$ & 22.9 / 0.719 & 63.7 / \textbf{0.649} & \textbf{22.2} / 0.659 \\ 
$\varepsilon = 1$ & \textbf{25.9} / 0.726 & 74.2 / \textbf{0.659} & 39.4 / 0.678 \\ 
\end{tabular}
\end{center}
\end{table}

\noindent\textbf{Qualitative results.}
In Figure~\ref{fig:mm_celeb_samples}, we provide image samples generated by models for each privacy regime. As on the MS-COCO dataset, DP-Wavelet in the non-private setting demonstrates strong semantic adaptation, faithfully capturing detailed attributes of the reference image such as skin tone, hairstyle, and accessories (e.g., glasses for the second image). In the private settings, our method maintains a better balance of semantic fidelity and image quality than the baselines. Notably, images generated under strict privacy ($\varepsilon=1$) reveal more visual artifacts as a consequence of the lower signal-to-noise ratio inherent to the DP training regime.

\subsection{Discussion}

Our results suggest that DP-Wavelet's advantages stem from concentrating the privacy budget on coarse structural prediction ($LL_0$) rather than distributing it across all image components. DP-Wavelet demonstrates strong performance on the challenging MM-CelebA-HQ dataset which is likely due to the improved signal-to-noise ratio in gradient updates that comes from only updating parameters corresponding to high-energy coefficients. However, on MS-COCO there is some gap in image quality between the images generated by DP-LDM and DP-LlamaGen. To investigate this, we visualized the decoded coarse image intermediaries from DP-Wavelet, as shown in Appendices \ref{app:lowres_samples} and \ref{app:more_dp_wavelet}, and found that the coarse tokens already capture much of the global image structure and color. This implies that the remaining image quality gap arises from the frozen public upsampler which may be fixed with stronger pretraining. Beyond utility, the spectral decomposition also improves computational efficiency: by restricting DP gradient updates to the low-dimensional coarse scale, we  reduce the optimization burden and achieve faster training times as discussed in Appendix \ref{app:train_speed}.

Regarding the weaker domain adaptation observed for DP-LDM, we hypothesize that text conditioning plays a different role across methods. For diffusion-based and raster-order autoregressive baselines, short captions may provide an insufficient signal to recover the style and color from private images, especially under DP noise. Since DP-Wavelet demonstrates strong domain adaptation and relies less on textual specificity by anchoring generation to coarse image structure, this suggests that applying DP at the level of semantically meaningful image intermediaries is particularly effective.

\section{Conclusion}
\label{sec:conclusion}

In this work, we introduced DP-Wavelet, a coarse-to-fine framework for differentially private text-to-image generation that applies privacy at the level of semantically meaningful image intermediaries. By decomposing image synthesis into a private coarse-scale prediction and a public fine-scale refinement, DP-Wavelet can allocate the privacy budget to low-frequency wavelet components that capture global structure, while delegating high-frequency detail generation to a frozen public prior via DP post-processing.

Our approach is the first scalable autoregressive method for text-to-image generation under differential privacy, demonstrating that autoregressive models can be competitive with a strategic DP finetuning method. Empirically, DP-Wavelet achieves strong style preservation and competitive image quality on challenging benchmarks, particularly on MM-CelebA-HQ, while remaining more computationally efficient than diffusion-based baselines.

More broadly, our results suggest that improving private generative modeling requires not only better DP optimizers, but also careful alignment between the privacy mechanism and the structure of the data representation. We view coarse image intermediaries as a promising direction for future work on high-utility, privacy-preserving generative models.

\bibliographystyle{abbrvnat}
\bibliography{references}
%%%%%%%%%%%%%%%%%%%%%%%%%%%%%%%%%%%%%%%%%%%%%%%%%%%%%%%%%%%%

\newpage
\appendix

\section{Training summary and hyperparameters}

This section presents additional training environment details, including learning rate (LR), the L2 clip norm bound used for DP-SGD, and the number of linear warm-up steps used in the LR schedule.

All methods employ a learning rate schedule with linear warm-up and cosine decay (down to 5\% of the maximum learning rate) and run DP-Adam as the DP optimizer. 

DP-Wavelet and DP-LDM are implemented in JAX (Python)~\cite{jax2018github} with TPU hardware acceleration, using Kauldron~\cite{kauldron2024} as a configuration manager. DP-LlamaGen is implemented in PyTorch with GPU hardware acceleration. For hardware, we use 32 TPU v5p chips for finetuning DP-Wavelet, 64-128 TPU v5p chips for DP-LDM (depending on the dataset), and one H100 GPU for DP-LlamaGen. 

One point of interest is that the training speed of DP-Wavelet is faster when DP-Adam is used instead of Adam. We conjecture that the XLA compilation of the DP-Adam training loop provides is more efficient at reducing bottlenecks, e.g., through operator fusion and static shape inference, than the XLA compilation of (non-DP) Adam's training loop.

Table~\ref{tab:ms_coco_hypers} and \ref{tab:mm_celeb_hypers} present the training hyperparameters for the MS-COCO and MM-CelebA-HQ experiments, respectively. It is worth mentioning that the number of training steps was chosen at the point where either the loss stopped decreasing or the FID stopped changing.

\begin{table*}[tb]
    \caption{Summary of the experimental setup.}
    \label{tab:experiment_setup}
    \centering
    \begin{tabular}{lccc}
        \toprule
        \textbf{Setting} & \textbf{DP-LDM}~\cite{liu2023differentially} & \textbf{DP-LlamaGen}~\cite{sun2024autoregressive} & \textbf{DP-Wavelet} (Ours) \\
        \midrule
        Base Architecture & Stable Diffusion v1.4 (UNet) & LlamaGen-XL & AR-SIT (SIT-SC-5) \\
        Trainable Parameters & 343M & 775M & 350M \\
        Loss Function & Mean-Squared Error & LlamaGen Training Loss\tablefootnote{See the \textbf{training losses} paragraph in Section 2.2 of \cite{sun2024autoregressive} for details.} & Softmax Cross-Entropy \\
        Compute Resources & 64-128 v5p TPUs & 1 H100 GPU  & 32 v5p TPUs \\
        Finetuning Speed $\uparrow$ & 7.14 steps/s & 0.29 steps/s & 7.78 steps/s \\
        DP-Finetuning Speed $\uparrow$ & 0.20 steps/s & 0.04 steps/s & 8.46 steps/s \\
        \bottomrule
    \end{tabular}
\end{table*}

\label{app:hypers}
\begin{table}[htbp]
\centering
\caption{MS-COCO hyperparameters\label{tab:ms_coco_hypers}.}
\begin{tabular}{l|ccc}
& DP-LDM & DP-LlamaGen & DP-Wavelet \\ \hline
Max LR for $\varepsilon = \infty$ & $1e\mbox{-}5$ & $2e\mbox{-}5$ & $2e\mbox{-}5$ \\ 
Max LR for $\varepsilon < \infty$ & $1e\mbox{-}4$ & $2 e\mbox{-}5$ & $2 e\mbox{-}5$ \\ 
\shortstack[l]{\# steps for $\varepsilon = \infty$} & 3,230 & 4,500 & 100,000 \\ 
\shortstack[l]{\# steps for $\varepsilon < \infty$} & 3,230 & 1,125 & 100,000 \\ 
\# LR warmup steps & 100 & 500 & 500 \\ 
Batch size\tablefootnote{DP-LlamaGen used a batch size of 256 for non-private finetuning.} & 256 &  1024 & 1024 \\ 
L2 Clip for $\varepsilon < \infty$ & $1e\mbox{-}1$ & $1e\mbox{-}1$ & $1e\mbox{-}1$ \\
\end{tabular}
\end{table}

\begin{table}[b]
\caption{MS-COCO hyperparameters\label{tab:mm_celeb_hypers}}
\centering
\begin{tabular}{l|ccc}
& DP-LDM & DP-LlamaGen & DP-Wavelet \\ \hline
Max LR for $\varepsilon = \infty$ & $5e\mbox{-}5$ & $2e\mbox{-}5$ & $2e\mbox{-}5$ \\ 
Max LR for $\varepsilon < \infty$ & $2e\mbox{-}5$ & $2e\mbox{-}5$ & $2e\mbox{-}5$ \\ 
\shortstack[l]{\# steps for $\varepsilon = \infty$} & 4,680 & 1,125 & 100,000 \\ 
\shortstack[l]{\# steps for $\varepsilon < \infty$} & 4,680 & 1,125 & 100,000 \\ 
\# LR warmup steps & 100 & 500 & 500 \\ 
Batch size & 1024 & 1024 & 1024 \\ 
L2 Clip for $\varepsilon < \infty$ & $1e\mbox{-}1$ & $1e\mbox{-}1$ & $1e\mbox{-}1$  \\
\end{tabular}
\end{table}

\section{Low resolution samples from DP-Wavelet}
\label{app:lowres_samples}

In Figure~\ref{fig:lowres_wavelet_samples}, we present the low-frequency subbands generated by $\hat{\mathcal{T}}$ in stage 2 of DP-Wavelet (Algorithm~\ref{alg:dp_wavelet}). Notice that the images for the $\varepsilon=\infty$ case strongly resemble the original training image, but the upsampled versions in Tables~\ref{fig:ms_coco_samples} and \ref{fig:mm_celeb_samples} introduce graphical artifacts that reduce the overall fidelity of the generate image.

\begin{figure}[htbp]
    \centering
    \includegraphics[width=0.7\linewidth]{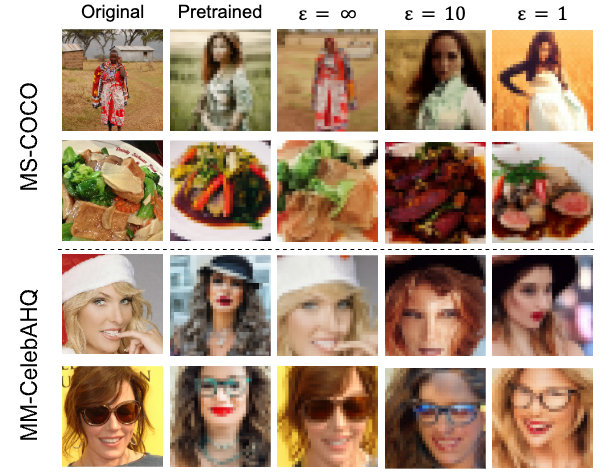}
    \caption{MS-COCO and MM-CelebA-HQ sample images generated by $\hat{\mathcal{T}}$ in stage 2 of DP-Wavelet. These are the decoded images formed by $\{\hat{t}_{i}\}$ when passing them through the decoder $\mathcal D$. \label{fig:lowres_wavelet_samples}}
    \Description{Low resolution DP-Wavelet samples}
\end{figure}

\newpage
\section{More non-private examples from DP-Wavelet}
\label{app:more_dp_wavelet}
In Figure~\ref{fig:more_dp_wavelet}, we present more examples generated by DP-Wavelet for both the MS-COCO and MM-CelebA-HQ datasets for the $\varepsilon=\infty$ setting. Each column consists of a triple containing original reference image (left), the final image generated by DP-Wavelet (center), and the low-resolution DP-Wavelet intermediary (right).

\begin{figure}[htbp]
    \centering
    \includegraphics[width=0.6\linewidth]{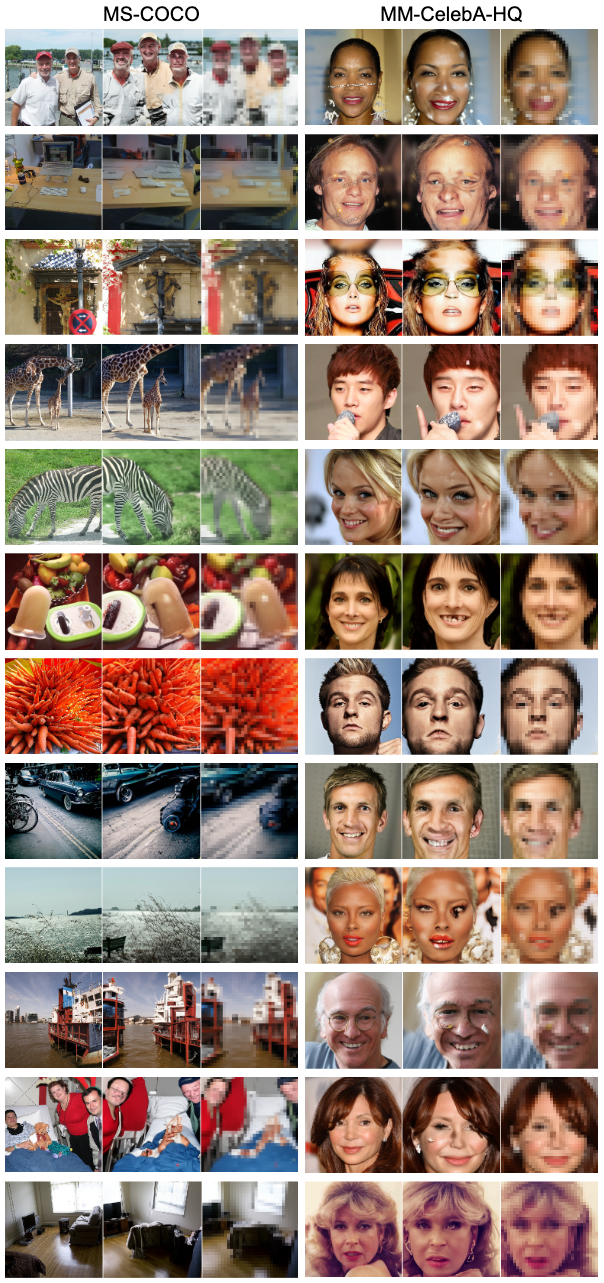}
    \caption{MS-COCO and MM-CelebA-HQ sample images generated by DP-Wavelet. \textit{Left}. Original image, \textit{Center}. Final generated image, \textit{Right}. Low resolution intermediary. \label{fig:more_dp_wavelet}}
    \Description{Low resolution DP-Wavelet samples}
\end{figure}

\section{Privacy guarantee of DP-Wavelet}
\label{app:dp_proof}

\begin{theorem}
    The output of DP-Wavelet (Algorithm~\ref{alg:dp_wavelet}) is $(\varepsilon, \delta)$-DP.
\end{theorem}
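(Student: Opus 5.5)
The plan is to factor Algorithm~\ref{alg:dp_wavelet} as $f \circ \mathcal{M}$, where $\mathcal{M}$ is the only part that reads the private images $y$, and then apply Proposition~\ref{prop:ppp}.

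First, I will make precise which inputs are private and which are public. Following the setting of the paper, the texts $x=\{x_i\}$ and the pretrained tuple $(\mathcal E,\mathcal D,\mathcal T)$ are public and fixed independently of $y$. Neighboring datasets differ in one text-image pair. Since texts are public, the relevant change is in one private image $y_i$ (equivalently, adding or removing one example).

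Next, I define the mechanism $\mathcal{M}$ as Stage~1 together with its input preprocessing. Step~1 tokenizes each $y_i$ with the fixed encoder $\mathcal E$ into $t_i$. This is a deterministic per-example map, so a dataset of pairs $(x_i,y_i)$ maps to a dataset of pairs $(x_i,t_i)$, and neighboring datasets map to neighboring datasets. Step~2 runs DP-SGD (or DP-Adam, DP-LoRA) on $(x_i,t_i)$. Only the parameters of $\mathcal T$ governing the coarse tokens are updated, with per-example clipping and Gaussian noise. The noise multiplier $\sigma$ is calibrated by the PLD accountant for the chosen sampling rate $q$, number of steps $T$, and target $(\varepsilon,\delta)$. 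By the guarantee of the accountant, the released parameters $\hat{\mathcal T}$ are $(\varepsilon,\delta)$-DP with respect to $\{(x_i,t_i)\}$, and hence with respect to $\{(x_i,y_i)\}$. Two further points are needed here:
\begin{itemize}
\item Freezing the remaining parameters only restricts the coordinates that are updated. Clipping is applied to the full trainable gradient, so the sensitivity bound $C$ still holds.
\item The optimizer's update rule (the Adam moments) is a post-processing of the noisy gradients, so it adds no privacy cost.
\end{itemize}

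Then I define $f$ as Stage~2. Given $\hat{\mathcal T}$, Stage~2 samples $\hat t_i$ conditioned on the public $x_i$. It then completes the sequences with the frozen public $\mathcal T$ to obtain $s_i$ and decodes them with $\mathcal D$ to obtain $\hat y_i$. This is a randomized map whose inputs are $\hat{\mathcal T}$, $x$, $\mathcal T$, and $\mathcal D$, and whose fresh randomness is independent of $y$; it never accesses $y$. Proposition~\ref{prop:ppp} then gives that $\{\hat y_i\}=f(\mathcal M(D))$ is $(\varepsilon,\delta)$-DP.

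The main obstacle is the second step: justifying rigorously that the DP-SGD stage meets exactly $(\varepsilon,\delta)$. That claim rests on the correctness of the PLD accountant for Poisson-subsampled Gaussian mechanisms under add/remove adjacency. I would cite \cite{abadi2016deep, doroshenko2022connect} for it rather than reprove it. I would also confirm explicitly that the sampling, batching, and augmentations are implemented per example, so that the accountant's assumptions hold. The random crops and flips are applied independently within each example's gradient computation.
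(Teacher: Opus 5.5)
Your proposal is correct and follows essentially the same route as the paper. The DP finetuning makes $\hat{\mathcal T}$ $(\varepsilon,\delta)$-DP, and everything after it (sampling on the public texts, completion with the frozen $\mathcal T$, decoding with $\mathcal D$) is post-processing by Proposition~\ref{prop:ppp}. The paper applies that proposition in two hops (first to $\{\hat t_i\}$, then to $\{s_i\}$ and $\{\hat y_i\}$) rather than through a single $f\circ\mathcal M$ factorization, and it leaves implicit your checks on per-example tokenization, frozen parameters, and the Adam update.

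One small correction concerns adjacency. Changing a single $y_i$ while holding $x_i$ fixed is replace-one adjacency, which is not equivalent to the add/remove adjacency that the paper and your PLD accounting use. Drop the ``equivalently'' and state explicitly that the Stage~2 prompts are a fixed public input not derived from $D$.
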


\begin{proof}
    Let $\mathcal{A}_{\text{DP}}$ denote the $(\varepsilon,\delta)$-DP optimization algorithm (e.g., DP-SGD or DP-Adam) used to finetune the model $\mathcal{T}$ in Algorithm~\ref{alg:dp_wavelet} into a $(\varepsilon,\delta)$-DP model $\hat{\mathcal{T}}$. Using the fact that $\{x_i\}$ is public data, the fact that $\hat{\mathcal{T}}$ is $(\varepsilon,\delta)$-DP, and Proposition~\ref{prop:ppp}, it holds that the output $\{\hat{t}_i\}$ in the second stage of DP-Wavelet is also $(\varepsilon,\delta)$-DP. Similarly, using the fact that $\mathcal T$ and $\mathcal D$ are public and the same Proposition~\ref{prop:ppp}, it similarly follows that $\{s_i\}$ and the final output $\{\hat{y}_i\}$ are  $(\varepsilon,\delta)$-DP.
\end{proof}

\section{Discussion of training speed}
\label{app:train_speed}

DP-Wavelet is significantly faster to train compared to DP-LDM (\textbf{42x faster}) and DP-LlamaGen (\textbf{212x faster}). DP-LDM also requires more 2x resources compared to DP-Wavelet, despite having a similar number of parameters. We give some hypotheses on the faster training speed and fewer resources required by DP-Wavelet compared to DP-LDM and DP-LlamaGen:
\begin{enumerate}
    \item DP-LlamaGen has more than 2x the number of model parameters as the other methods.
    \item The most expensive (runtime and storage-wise) computations during DP training in DP-LDM are unrolling convolution layers during backpropation in the UNet model. In contrast, DP-Wavelet's most expensive computations are large matrix multiplications, which are significantly less costly. This difference is especially costly during DP training when example-level gradients must be materialized, i.e., one gradient for every example in a batch.
    \item Both DP-LDM and DP-Wavelet are implemented in JAX and are efficiently compiled to XLA to be optimized on TPUs. In particular, this enables operator fusion, which can significantly improve costly linear algebra operations. In contrast, DP-LlamaGen (and the LlamaGen backbone) are primarily built around the PyTorch framework and only natively support GPU acceleration.
\end{enumerate}

\section{More motivating examples}

\label{app:synth_motivation}
Another motivating setting is when the text for each image is private, but there exists a DP model $C$ that accurately generates the overall distribution of text. Our proposed DP-Wavelet method can be modified to handle this setting by (a) still DP finetuning on the private image-text pairs in the first stage, but (b) using the DP texts generated by $C$ to sample the coarse wavelet tokens in the second stage. 

In practice, there will be degradation in the distribution between the DP texts and the private texts due to quality gap of $C$, so our work focuses on the quality of images as if we had access to the private texts (to isolate the quality-privacy trade-off).

\end{document}